 \newtheorem{theorem}{Theorem}
  \newtheorem{definition}{Definition}
   \newtheorem{proof}{Proof}
    \newtheorem{remark}{Remark}
\begin{document}

\title{Nonparametric Risk Assessment and Density Estimation for Persistence Landscapes}

\author[1]{Soroush Pakniat \thanks{sorush.pakniat@atu.ac.ir}}
\author[2]{Farzad Eskandari \thanks{aeskandari@atu.ac.ir}}
\affil[1]{Faculty of Mathematics and Computer Science, Allameh Tabatabai  University}
\affil[2]{Faculty of Mathematics and Computer Science, Allameh Tabatabai  University}

\date{}
\maketitle

\begin{abstract}
This paper presents approximate confidence intervals for each function of parameters in a Banach space based on a bootstrap algorithm. We apply kernel density approach to estimate the persistence landscape. In addition, we evaluate the quality distribution function estimator of random variables using integrated mean square error (IMSE). The results of simulation studies show a significant improvement achieved by our approach compared to the standard version of confidence intervals algorithm. In the next step, we provide several
algorithms to solve our model. Finally, real data analysis shows that the accuracy of our method compared to that of previous works for computing the confidence interval.

\end{abstract}

\section{Introduction}
In recent years, the increased rate of data generation in some fields has emerged the need for some new approaches to extract knowledge from large data sets. One of the approaches for data analysis is topological data analysis (TDA), which refers to a set of methods for estimating topological structure in data (point cloud)(see the survey \cite{CarlssonTopologyData}; \cite{Ghrist}; \cite{CarlssonSurveyActa}; \cite{EdelsbrunnerAMSBook}). A persistence homology is a fundamental tool for extracting topological features in the nested sequence of subcomplexes (\cite{Edelsbrunner2002Survey}). In \cite{ChazalDataScientists}, the authors introduced a TDA from the perspective of data scientists. Since the use of TDA has been limited by combining machine learning and statistic subjects, we need to create a set of real-valued random variables that satisfy the usual central limit theorem and allow us to obtain approximate confidence interval and hypothesis testing. In the present study, we propose an alternative approach to approximate the sampling distribution and compute interval without some presupposition. This approach, which is asymptotically more accurate than the computation of standard intervals, analyzes a sample data population and identify the probability distribution of data. Some applications of TDA in various fields are summarized in the following: \newline 
A successful application of TDA was performed to extract the shape of breast cancer data in the form of the simplicial complex using Mapper technique by  \cite{NicolauCancer}. Computing the correlation of dynamic model of protein data and then this is input for topological methods by \cite{VioletaDynamicalProtein}, modeling the spaces to patches pixels and describing the global topological structure for patches \cite{CarlssonNaturalImage}, the use of computational topology for solving converage problem in sensor networks by \cite{SilvaSensorNetworks}, computing persistence homology for identifying the global structure of similarities between data by \cite{WagnerTowards}, applying persistence measures for the analysis of the observed spatial distribution of galaxies with Megaparsec scales by \cite{PranavCosmicWeb} are some potential applications of TDA.\cite{JMLR:v18:16-337} introduced a persistence image, which is the vectorization of persistence homology, and it applied on the dynamical system. \\
TDA has some fundamental aspects, \cite{BubenikCategorification} recreated a persistence homology based on a category theory and studied some features of $(\mathbb{R}, \leq)$, which consists of a set of objects and morphisms. Also \cite{BubenikGromovHausdorff} presented a generalization of Hausdorff distance, Gromov-Hausdorff distance, and the space of metric spaces in the form of categorical view. To generalize persistence module with the category theory and soft stability theorem see \cite{BubenikMetricGeneralized}. \cite{BubenikHigherInterpolation}, where the authors present a categorical language for construction embedding of a metric space into the metric space of persistence module.\\
In its standard paradigm, TDA computes the homology of point cloud that lies in some metric space. Thus, it creates a tool from algebraic topology such as simplicial complex, to eventually extract holes in topological space embedded in a $d$ - dimensional Euclidean space. It can be stated that there are $d$ different types of holes in dimensions $0$ to $d-1$. Moreover, there are additional topological attributes that we cannot distinguish between the feature of the original space and noise spawned in the process of changing the resolution. Thus, the persistence is one of the interest invariants in historical analysis. To compute persistence homology readers can refer to \cite{Zomorodian2005Survey} and  \cite{AfraBook}. \\
The space of persistence diagram is geometrically very complicated. In order to estimate Fr$\acute{e}$chet mean from the set of diagrams ($X_1 , \ldots , X_n$) by \cite{TurnerFrechet}, \cite{TurnerMeans} showed that the mean of the diagram is not unique but is unqiue for a special class of persistence diagram. Moreover, as can be seen, the space of persistence diagram is analogous to $L^p$ space. As a result, it is not plausible to use any parametric models for distribution. In this regard,  \cite{RobinsonArXiv} used randomization test where two set of diagrams are drawn from the same single distribution of diagrams. \cite{MileykoProbabilityMeasures} provides a theoretical basis for a statistical treatment that supports expectations, variance, percentiles, and conditional probabilities on persistence diagrams. \cite{MichelStatistical} introduces an alternative function on statistical analysis of the distance to measure (DTM) and estimates persistence diagram on metric space.  \cite{BlumbergStatistical} adapts persistence homology for computing confidence interval and hypothesis testing. Finally, \cite{ChazalStochastic} investigates the convergence of the average landscapes and bootstrap. \\
Due to the limitation of barcode and persistence diagram with combining statistics, we use a sequence of function such that $\lambda_k(t): \mathbb{R} \rightarrow \bar{\mathbb{R}}$ where $\bar{\mathbb{R}}$ denotes the extended real numbers and $\lambda_k(t)$ is persistence landscape (\cite{JMLR:v16:bubenik15a}). Next, we create a real-valued random variable by applying some functional in separable Banach space and we obtain the list of real-valued random variables.\newline
In the present work, we aimed at proposing a nonparametric inference of data to infer an unknown quantity to keep the number of underlying assumptions as weak as possible. Our approach would be of great assistance for the case that the modeler is unable to find a theoretical distribution that provides a good model for the input data. The main objective of this work is to present a generalized estimation of the confidence interval for large and small samples using a differentiable function of data and then nonparametric method to estimate probability density. As the first step, we estimate the CDF of random variables of persistence landscapes. To compute a large sample confidence interval, we use an empirical function that estimates the standard error of a statistical function of random variables. Next, we use bootstrap method for estimating the variance and distribution of random variables that we generate and random variate of the empirical distribution function and replace with main random variables. The goal of nonparametric density estimation of probability density is a few assumption about it as possible. Our estimator depends on a proper choice of smoothing parameter and kernel function that converges to the true density faster. We evaluate the quality of estimator with integrated mean squared error, followed by applying it to data sets of breast cancer.
 \begin{figure}
\centering
\includegraphics[scale=0.25]{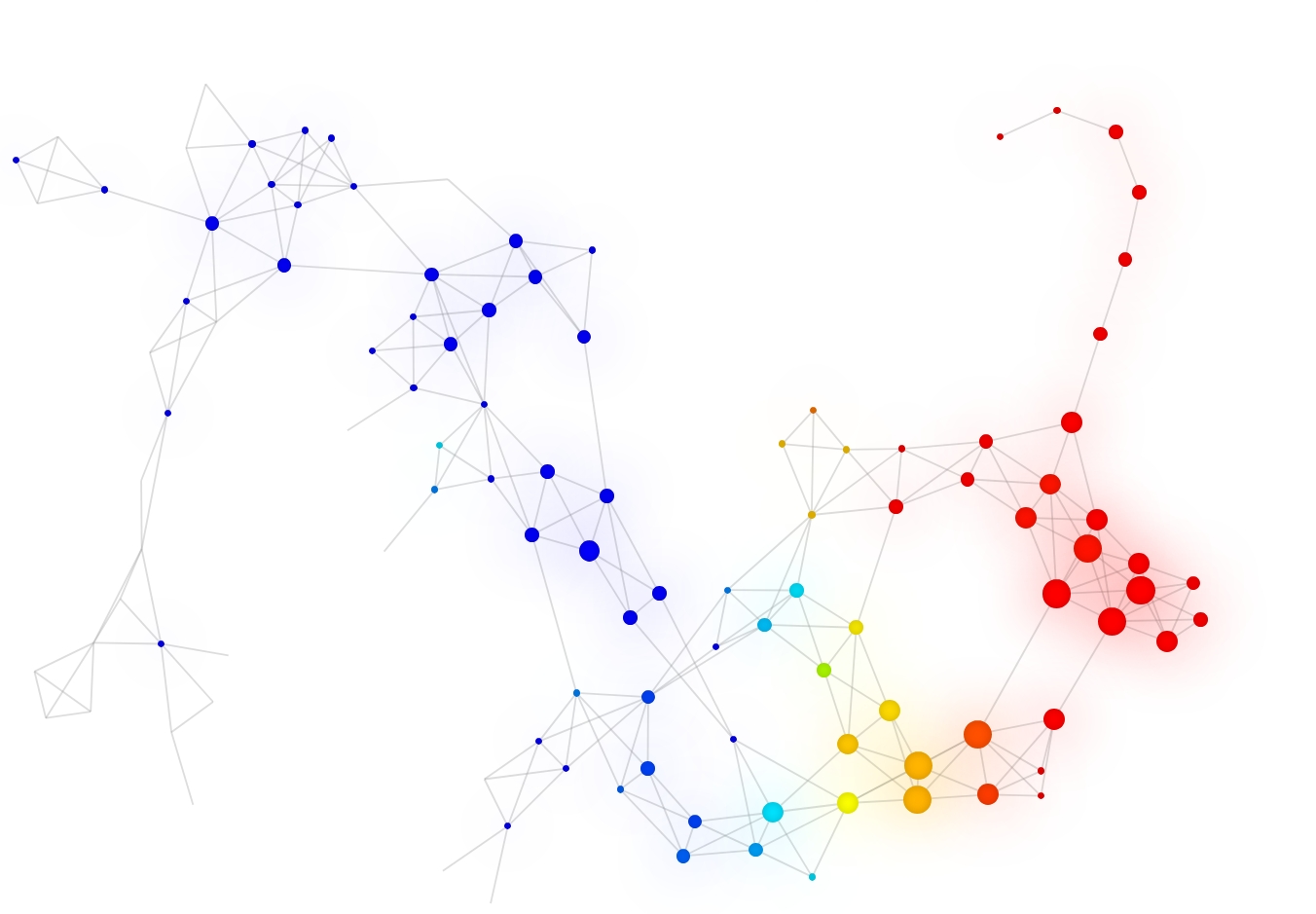}
\caption{The mapper methods applied to Wisconsin breast cancer dataset used by \cite{KeplerMapper} in python programming software package; the clusters correspond to covering of given point cloud.}
\label{fig:WisconsinBreastCancer}
\end{figure}
\newline
The remainder of this paper is organized as follows: In section \ref{section:Background}, we review the necessary background of persistence landscape. In section \ref{section:Nonparametric}, we provide theoretical background from nonparametric approach and algorithms. Finally, in section \ref{section:Applications}, we apply our approach on a sampling of objects and real datasets.
\section{Background of Persistence Landscape} \label{section:Background}
A simplicial complex $K$ is defined for representing a manifold and triangulation of topological space $X$. $K$ is a combinatorial object that is stored easily in computer memory and can be constructed by several methods in high dimensions with any metric space. A subcomplex $L$ of simplicial complex $K$ is a simplicial complex such that $L \subseteq K.$ A filtration of simplicial complex $K$ is a nested sequence of subcomplexs such that $K^0 \subseteq K^1 \subseteq \ldots \subseteq K^m.$ To create this object, you can see the (\cite{KhuyenGeneralizedCech}; \cite{ChambersVietorisRips}; \cite{DeyGraphInduced} and \cite{SilvaWitness}).\\
The fundamental group of space $X$ ($\pi_1(X,x_0)$ at the basepoint $x_0$), as an important functor in algebraic topology, consist of loops and deformations of loops. The fundamental group is one of the homotopy group $\pi_n(X,x_0)$ that has a higher differentiating power from space $X$, however, this invariant of topological space $X$  depends on smooth maps and is very complicated to compute in high dimensions. Thus, we must use an invariant of topological space that is computable on the simplicial complex. Homology groups show how cells of dimension $n$ attach to subcomplex of dimension $n-1$ or describe holes in the dimension of $n$ (connected components, loops, trapped volumes,etc.). The nth homology group is defined as $H_n= \ker \partial_n / \mbox{im} \partial_{n+1} = Z_n / B_n$ such that $\partial_n$ is the boundary homomorphism of subcomplexs, $Z_n$ is the cycle group and $B_n$ is boundary group. The nth Betti number $\beta_n$ of a simplicial complex $K$ is defined as $\beta_n = rank(Z_n) - rank(B_n)$. Through filtration step, we tend to extract invariant that remains fixed in this process, thus persistence homology satisfies this criterion for space-time analysis. Let $K^l$ be a filtration of simplicial complex $K$, the pth persistence of nth homology group of $K^l$ is $H_n^{b,d} = Z_n^b / (B_n^{b+d} \cap Z_n^b)$. The Betti number of pth persistence of nth homology group is defined as $\beta_n^{b,d}$ for the rank of free subgroup $(H_n^{b,d})$. To visualize persistence in space-time analysis, we should find the interval of $(i,j)$ that is invariant constantly through the filtration and obtain a topological summary from the point cloud.\\
Now, by rewriting the Betti number of the pth persistence of nth homology group, we have:
\begin{equation*}
\lambda (b,d) =
  \begin{cases}
    \beta^{b,d}     &  \mbox{if} \ b \leq d \\
    0   & otherwise
  \end{cases}
\end{equation*}
To convert $\lambda(b,d)$ function to a decreasing function, we change coordinate on it, Let $m = \dfrac{b+d}{2}$ and $h = \dfrac{d-b}{2}$. The rescaled rank function is:
\begin{equation*}
\lambda (m,h) =
  \begin{cases}
    \beta^{m-h,m+h}     &  \mbox{if} \ h \geq 0 \\
    0   & otherwise
  \end{cases}
\end{equation*}
\begin{definition}
The persistence landscape is a function  $\lambda: \mathbb{N} \times \mathbb{R} \rightarrow \bar{\mathbb{R}}$ where $\bar{\mathbb{R}}$ denoted the extended real numbers (introduced by \cite{JMLR:v16:bubenik15a}). In the other words, persistence landscape is sequence of function $\lambda_k : \mathbb{R} \rightarrow \bar{\mathbb{R}}$ such that:
\begin{equation}\label{eq:landscapeBubenik}
\lambda_k (t) = \sup (m \geq 0 | \beta^{t-m,t+m} \geq k).
\end{equation}
\end{definition}
We assume that our persistence landscape lies in separable Banach space ($L^p$). Let $Y:(\Omega,\mathcal{F},\mathcal{P}) \rightarrow \mathbb{R}$ be a real value random variable on underlying probability space, $\Omega$ is a sample space, $\mathcal{F}$ is a $\sigma$-algebra of events, and $\mathcal{P}$ is a probability measure. The expected value $E(Y) = \int Y dP$ and $\Lambda$ is the corresponding persistence landscape. If $f$ is a functional member of $L^q$ with $\dfrac{1}{p} + \dfrac{1}{q} = 1$, let
\begin{equation*}
Y = \int f \Lambda = \parallel f \Lambda \parallel_1
\end{equation*}
Then
\begin{equation*}
\sqrt{n} [\bar{Y_n} - E(Y)] \xrightarrow[]{\text{d}} N(0,Var(Y))
\end{equation*}
where $\parallel . \parallel_1$ denotes p-norm and $d$ denotes convergence in distribution. To computing confidence interval of real value random variable $Y$, we use the normal distribution to obtain the approximate ($1 - \alpha$) for $E(Y)$ as:
\begin{equation} \label{eq:bubenikConfidence}
\bar{Y_n} \pm z^* \dfrac{S_n}{\sqrt{n}}
\end{equation}
where $S^2_n = \dfrac{1}{n-1} \sum_{i=1}^n (Y_i - \bar{Y_n})^2$ and $z^*$ is the upper $\dfrac{\alpha}{2}$ critical value for the normal distribution. \newline
To apply persistence landscape on points, we choose a functional $f \in L^p$. If each $\Lambda(\Omega)$ is supported by $\lbrace 1, \ldots , K \rbrace \times [ -B,B ]$, take
\begin{equation} \label{functionalBubenik}
f(k,t) = \begin{cases}
1 & \mbox{if} \in [-B,B] \ \mbox{and} \ k \leq K \\
0 & \mbox{otherwise}
\end{cases}
\end{equation}
then $\parallel f \lambda \parallel_1 = \parallel \Lambda \parallel_1$.
\section{Nonparametric on Persistence Landscapes} \label{section:Nonparametric}
The basic idea of this approach is to use data to infer an unknown quantity without any presumption. For a more detailed exposition, we refer the reader to \cite{WassermanNonparametric}. The first problem is to estimate the cumulative distribution function (CDF), which is an important problem in our approach.
\begin{definition}
Let $X_1, \ldots , X_n \sim F$ where $F(x)=P(X \leq x)$. We estimate $F$ with the empirical distribution function $\widehat{F}_n$ which is the CDF that puts mass $\dfrac{1}{n}$ at each data point $X_i$. Formally,
\begin{equation*}
\widehat{F}_n = \dfrac{1}{n} \sum_{i=1}^n I(X_i \leq x)
\end{equation*}
where
\begin{equation*}
I(X_i \leq x) = 
\begin{cases}
1 & \mbox{if} \  X_i \leq x \\
0 & \mbox{otherwise}.
\end{cases}
\end{equation*}
\end{definition}
Let $X_1 , \ldots X_n \sim F$ and let $\widehat{F}_n$ be the empirical CDF, Then, at any fixed value of $x$
$E(\widehat{F}_n(x)) = F(x)$ and $V(\widehat{F}_n(x)) = \dfrac{F(x)(1 - F(x))}{n}$, 
where $V(\widehat{F}_n(x))$ denotes variance of empirical CDF.
\begin{definition}
A statistical functional $T(F)$ is any function of $F$. The plug-in estimator of $\theta = T(F)$ is defined by 
\begin{equation*}
\widehat{\theta}_n = T(\widehat{F}_n).
\end{equation*}
A functional of the form $\int a(x) dF(x)$ is called a linear functional where $a(x)$ denoted a function of $x$. The plug-in estimator for linear functional $T(F) = \int a(x)dF(x)$ is:
\begin{equation*}
T(\widehat{F}_n) = \int a(x) d \widehat{F}_n (x) = \dfrac{1}{n} \sum_{i=1}^n a(X_i).
\end{equation*}
\end{definition}
For an approximation of the standard error of a plug-in estimator, use the influence function as follows:
\begin{definition}
The G\^ateaux derivative of $T$ at $F$ in the direction $G$ is defined by:
\begin{equation*}
L_F (G) = \lim_{\epsilon \rightarrow 0} \dfrac{T((1 - \epsilon)F + \epsilon G)- T(F)}{\epsilon}
\end{equation*}
The empirical influence function is defined by $\widehat{L}(x) = L_{\widehat{F}_n} (x)$. Thus,
\begin{equation*}
\widehat{L}(x) = \lim_{\epsilon \rightarrow 0} \dfrac{T((1 - \epsilon)\widehat{F}_n + \epsilon G)- T(\widehat{F}_n)}{\epsilon} .
\end{equation*}
\end{definition}
\begin{theorem}\label{eq:influenceFunctions}
Let $T(F) = \int a(x) dF(x)$ be a linear functional. Then,
\begin{equation*}
L_F(x) = a(x) - T(F) \ \mbox{and} \ \widehat{L}(x) = a(x) - T(\widehat{F}_n),
\end{equation*}
Let
\begin{equation*}
\widehat{\tau}^2 = \dfrac{1}{n} \sum_{i=1}^n \widehat{L}^2 (X_i) = \dfrac{1}{n} \sum_{i=1}^n (a(X_i)-T(\widehat{F}_n))^2
\end{equation*}
then
\begin{equation*}
\widehat{\tau}^2 \xrightarrow[]{\text{P}} \tau^2 \ \mbox{and} \ \dfrac{\widehat{se}}{se} \xrightarrow[]{\text{P}} 1 \  \mbox{where} \ \xrightarrow[]{\text{P}} \ \mbox{denoted convergence in probability} \ \widehat{se} = \dfrac{\widehat{\tau}}{\sqrt{n}} \ \mbox{and} \ se = \sqrt{V(T(\widehat{F}_n))}.
\end{equation*}
\begin{proof}
We see $E(L_F(x)) = E(a(x)) - T(F) = T(F) - T(F)=0$. So, by the weak low of a large number(WLLN), it can easily be shown that $\hat{\tau}^2$ is a consistant  estimator for $\tau^2 = var(L_F(x)) = E(L_F(x)^2)$.
\end{proof}
\end{theorem}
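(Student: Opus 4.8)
The plan is to handle the statement in two stages: first the exact identification of the influence function for a linear functional, then the consistency of the variance estimator built from it.

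For the first part, I would substitute the mixture $(1-\epsilon)F + \epsilon G$ directly into $T(F) = \int a(x)\, dF(x)$. Because integration against a measure is linear in the measure, $T((1-\epsilon)F + \epsilon G) = (1-\epsilon)T(F) + \epsilon T(G)$, so the difference quotient in the definition of $L_F(G)$ collapses to $T(G) - T(F)$ with no remainder term and the limit as $\epsilon \to 0$ is immediate. Taking the direction to be the point mass $G = \delta_x$ gives $T(\delta_x) = \int a(y)\, d\delta_x(y) = a(x)$, hence $L_F(x) = a(x) - T(F)$; replacing $F$ by $\widehat{F}_n$ yields $\widehat{L}(x) = a(x) - T(\widehat{F}_n)$ verbatim. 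This step is purely algebraic and I expect no difficulty.

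The substance of the theorem is the convergence $\widehat{\tau}^2 \xrightarrow[]{\text{P}} \tau^2$. The tempting shortcut, namely to read $\widehat{\tau}^2 = \frac{1}{n}\sum_{i=1}^n \widehat{L}^2(X_i)$ as a sample average of i.i.d.\ terms and invoke the WLLN directly, is not quite legitimate, because $\widehat{L}(X_i) = a(X_i) - T(\widehat{F}_n)$ is itself data-dependent through $T(\widehat{F}_n) = \frac{1}{n}\sum_{j=1}^n a(X_j)$; the summands are not independent. I would therefore first expand the square to obtain $\widehat{\tau}^2 = \frac{1}{n}\sum_{i=1}^n a(X_i)^2 - T(\widehat{F}_n)^2$ and treat the two pieces separately: by the WLLN, $\frac{1}{n}\sum_{i=1}^n a(X_i)^2 \xrightarrow[]{\text{P}} E[a(X)^2]$ and $T(\widehat{F}_n) \xrightarrow[]{\text{P}} T(F)$, and then combine these via the continuous mapping theorem and Slutsky's theorem to get $\widehat{\tau}^2 \xrightarrow[]{\text{P}} E[a(X)^2] - T(F)^2 = \mathrm{Var}(a(X)) = \tau^2$. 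Here I am using that $E(L_F(X)) = E(a(X)) - T(F) = 0$, so that $\tau^2 = \mathrm{Var}(L_F(X)) = E[L_F(X)^2]$ coincides with $\mathrm{Var}(a(X))$, exactly as the displayed proof asserts.

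Finally, for the standard-error ratio I would observe that since $T(\widehat{F}_n) = \frac{1}{n}\sum_{i=1}^n a(X_i)$ is an ordinary sample mean, one has $se^2 = V(T(\widehat{F}_n)) = \tau^2/n$ exactly, while $\widehat{se}^2 = \widehat{\tau}^2/n$, so the ratio reduces to $\widehat{se}/se = \widehat{\tau}/\tau$. Assuming $\tau > 0$, one further application of the continuous mapping theorem (the square root being continuous at $\tau^2 > 0$) promotes $\widehat{\tau}^2 \xrightarrow[]{\text{P}} \tau^2$ to $\widehat{\tau}/\tau \xrightarrow[]{\text{P}} 1$. The main obstacle, as noted above, is resolving the data-dependence of $\widehat{L}$ through $T(\widehat{F}_n)$; once $\widehat{\tau}^2$ is rewritten in terms of genuine sample moments, everything else is a routine Slutsky-and-continuous-mapping argument.
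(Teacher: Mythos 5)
Your proposal is correct and rests on the same core idea as the paper's proof: since $E(L_F(X)) = E(a(X)) - T(F) = 0$, one has $\tau^2 = \mathrm{Var}(L_F(X)) = E[L_F(X)^2]$, and consistency of $\widehat{\tau}^2$ then follows from the weak law of large numbers. The only difference is one of completeness: the paper's one-line proof asserts the WLLN conclusion directly, whereas you derive the influence-function identity from the G\^ateaux definition, make the WLLN step rigorous by rewriting $\widehat{\tau}^2 = \frac{1}{n}\sum_{i=1}^n a(X_i)^2 - T(\widehat{F}_n)^2$ so as to remove the data-dependence of $\widehat{L}(X_i)$ on $T(\widehat{F}_n)$ (a point the paper's direct appeal to the WLLN glosses over), and supply the argument for $\widehat{se}/se \xrightarrow[]{\text{P}} 1$, which the paper omits, via the exact identity $V(T(\widehat{F}_n)) = \tau^2/n$ together with continuity of the square root at $\tau^2 > 0$.
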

\begin{definition}
If $T$ is Hadamard differentiable with respect to $d(F,G) = \sup_x |F(x) - G(x)|$ then
\begin{equation*}
\sqrt{n} (T(\widehat{F}_n)-T(F)) \leadsto N(0,\tau^2)
\end{equation*}
where $\tau^2 = \int L_F(x)^2 dF(x)$ and $\leadsto$ denotes convergence in distribution. Also,
\begin{equation*}
\dfrac{(T(\widehat{F}_n)-T(F))}{\widehat{se}} \leadsto N(0,1)
\end{equation*}
Such that
\begin{equation*}
\widehat{se} = \dfrac{\widehat{\tau}}{\sqrt{n}} \ \mbox{and} \ \widehat{\tau} = \dfrac{1}{n} \sum_{i=1}^n L^2(X_i).
\end{equation*}
\end{definition}
\subsection*{Bootstrap  Variance Estimation}
The nonparametric delta method is an approximation of $\dfrac{(T(\widehat{F}_n)-T(F))}{\widehat{se}} \leadsto N(0,1)$, A large sample confidence interval is $T(\widehat{F}_n) \pm z_{\alpha/2} \widehat{se}$. \\
The bootstrap is a method for estimating the variance and the distribution of a statistic $T_n = g(X_1 , \ldots , X_n)$. We can also use the bootstrap to construct confidence intervals, also the bootstrap estimate $V_F(T_n)$ with $V_{\widehat{F}_n(T_n)}$.
We estimation variance of $T_n$ with nonparametric bootstrap as follows:
\begin{enumerate}
\item Draw $X_1^* , \ldots , X_n^* \sim \widehat{F}_n$.
\item Compute $T_n^* = g(X_1^* , \ldots , X_n^*)$.
\item Repeat steps 1 and 2, $B$ times to get $T_{n,1}^* , \ldots , T_{n,B}^*$.
\item Let 
\begin{equation*}
v_{boot} = \dfrac{1}{B} \sum_{i=1}^{B} \Big( T_{n,b}^* - \dfrac{1}{B} \sum_{r=1}^B T_{n,r}^* \Big)^2.
\end{equation*}
\end{enumerate}
\begin{align*}
\begin{array}[t]{l} \mbox{Real world:}  \ F  \  \Longrightarrow X_1 , \ldots , X_n \Longrightarrow T_n = g(X_1 , \ldots , X_n) \\
\mbox{Bootstrap world:} \ \widehat{F}_n \Longrightarrow X_1^*,\ldots,X_n^* \Longrightarrow T_n^* = g(X_1^* , \ldots , X_n^*) \end{array}
\end{align*}
Also $V_F(T_n) \approx V_{\widehat{F}_n}(T_n) \approx v_{boot}$
\subsection{Bootstrap Confidence Intervals}
There are several ways to construct bootstrap confidence intervals that are difference from accuracy criterion.
\begin{itemize}
\item The simplest is the Normal interval, which is defined as,
\begin{equation*}
T_n \pm z_{\alpha / 2} \widehat{se}_{boot}
\end{equation*}
\item Let $\theta = T(F)$ and $\widehat{\theta}_n = T(\widehat{F}_n)$ be an estimator for $\theta$. We tend to estimate a nonparametric confidence interval for functions of $\theta$. The pivot $R_n = \widehat{\theta}_n - \theta$. Let $H(r)$ denotes the CDF of the pivot:
\begin{equation*}
H(r) = P_F(R_n \leq r).
\end{equation*}
Let $C_n^\ast = (a,b)$ where
\begin{equation*}
a = \widehat{\theta}_n - H^{-1} \big( 1 - \dfrac{\alpha}{2} \big) \ \mbox{and} \ b = \widehat{\theta}_n - H^{-1} \big( \dfrac{\alpha}{2} \big)
\end{equation*}
Since $a$ and $b$ depend on the unknown distribution $H$, we should form a bootstrap estimate of $H$ as:
\begin{equation*}
\widehat{H}(r) = \dfrac{1}{B} \sum_{i=1}^B I(R_{n,b}^* \leq r)
\end{equation*}
Where $R_{n,b}^* = \widehat{\theta}_{n,b}^* - \widehat{\theta}_n$. Let $r_\beta^*$ denote the $\beta$ sample quantile of $(R_{n,1}^* , \ldots , R_{n,B}^*)$ and let $\theta_\beta^*$ denote the $\beta$ sample quantile of $(\theta_{n,1}^* , \ldots , \theta_{n,B}^*)$. Note that $r_\beta^* = \theta_\beta^* - \widehat{\theta_n}$. Follows that an approximate $1 - \alpha$ confidence interval is $C_n = (\widehat{a},\widehat{b})$ is a nonparametric confidence interval a least $(1 - \alpha)$, where
\begin{align*}
\begin{array}[t]{l} \widehat{a} = \widehat{\theta}_n - \widehat{H}^{-1} \big( 1 - \dfrac{\alpha}{2} \big) = \widehat{\theta}_n - r_{1- \alpha/2}^* = 2 \widehat{\theta}_n - \theta_{1 - \alpha/2}^* \\
\widehat{b} = \widehat{\theta}_n - \widehat{H}^{-1} \big(\dfrac{\alpha}{2} \big) = \widehat{\theta}_n - r_{\alpha/2}^* = 2 \widehat{\theta}_n - \theta_{\alpha / 2}^* .\end{array}
\end{align*}
\item The $1 - \alpha$ bootstrap studentized pivotal interval is
\begin{equation*}
\big( T_n - z_{1 - \alpha/2}^* \widehat{se}_{boot} , T_n - z_{\alpha / 2}^* \widehat{se}_{boot} \big)
\end{equation*}
where $z_\beta^*$ is the $\beta$ quantile of $Z_{n,1}^*,\ldots , Z_{n,B}^*$ and
\begin{equation*}
Z_{n,b}^* = \dfrac{T_{n,b}^* - T_n}{\widehat{se}_b^*} .
\end{equation*}
\item The other approach for estimating the confidence interval for $h(\theta)$ is
\begin{equation*}
C_n = \big( T_{(B \alpha/2)}^* , T_{(B(1-\alpha)/2)} \big) ,
\end{equation*}
where $C_n$ is the bootstrap percentile interval in this approach, Just use the $\alpha / 2$ and $1 - \alpha/2$ quantiles of the bootstrap sample.
\end{itemize}
\subsection{Quality of Estimator}
The goal of nonparametric  density estimation is to estimate $f$ with as few assumptions about $f$ as possible. We denote the estimator by $\widehat{f}_n$. We will evaluate the quality of an estimator $\widehat{f}_n$ with the risk, or integrated mean squared error, $R=\mathbb{E}(L)$ where
\begin{align}
L = \int (\widehat{f}_n(x) - f(x))^2 dx
\end{align}
is the integrated squared error loss function. The estimators depend on some smoothing parameter $h$ chosen by minimizing an estimate of the risk. The loss function, which we now refer to as function of $h$, is:
\begin{align*}
L  \begin{array}[t]{l} = {\displaystyle \int} (\widehat{f}_n(x) - f(x))^2 dx \\
= {\displaystyle \int} \widehat{f}_n^2(x) dx - 2 {\displaystyle \int} \widehat{f}_n(x) f(x) dx + {\displaystyle \int} f^2(x) dx.\end{array}
\end{align*}
The last term does not depend on $h$ so minimizing the loss is equivalent to minimizing the expected value, therefore the cross-validation estimator of risk is:
\begin{equation} \label{eq:crossValidation}
\widehat{J}(h) = \int \big( \widehat{f}_n(x) \big)^2 dx - \dfrac{2}{n} \sum_{i=1}^n \widehat{f}_{(-i)} (X_i)
\end{equation}
where $\widehat{f}_{(-i)}$ is the density estimator obtained after removing the $i^{th}$ observation. 
\begin{theorem} \label{eq:theorem612}
Suppose that $f^\prime$ is absolutely continuous and that $\int \big( f^\prime (u) \big)^2 du < \infty$, Then,
\begin{align}
R(\widehat{f}_n , f ) = \dfrac{h^2}{12} \int \big( f^\prime (u) \big)^2 du + \dfrac{1}{nh} + o(h^2) + o (\dfrac{1}{n}).
\end{align}
Where $x_n = o(a_n)$ this means that $\lim_{n \rightarrow \infty} x_n / a_n = 0$.  The value $h^\ast$ that minimizes (\ref{eq:theorem612}) is
\begin{align}
h^\ast = \dfrac{1}{n^{1/3}} \Bigg( \dfrac{6}{\int ( f^\prime (u) )^2 du } \Bigg)^{1/3}.
\end{align}
With this choice of binwidth,
\begin{align}
R(\widehat{f}_n , f) \sim \dfrac{C}{n^{2/3}}
\end{align}
where $C = (3/4)^{2/3} \Big( \displaystyle\int \big( f^\prime (u) \big)^2 du \Big)^{1/3}$.
\end{theorem}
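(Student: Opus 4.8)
The plan is to recognize this as the standard integrated mean squared error (IMSE) calculation for a histogram density estimator with binwidth $h$ (the word ``binwidth'' in the statement confirms the estimator in question), and to decompose the risk into a bias-squared term and a variance term handled separately on each bin. First I would fix notation: partition the support into bins $B_j$ of width $h$, let $p_j = \int_{B_j} f(u)\,du$, and let $Y_j$ be the number of observations in $B_j$, so that $\widehat{f}_n(x) = Y_j/(nh)$ for $x \in B_j$ with $Y_j \sim \mathrm{Bin}(n,p_j)$. Writing $R(\widehat{f}_n,f) = \int \mathrm{bias}^2(x)\,dx + \int \mathrm{Var}(\widehat{f}_n(x))\,dx$, the entire theorem reduces to evaluating these two integrals asymptotically as $h \to 0$ and $nh \to \infty$.

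For the variance term, since $\mathrm{Var}(\widehat{f}_n(x)) = p_j(1-p_j)/(nh^2)$ is constant on $B_j$, integrating over each bin and summing gives $\sum_j p_j(1-p_j)/(nh)$. Using $\sum_j p_j = 1$ together with $\sum_j p_j^2 = O(h)$ (because $p_j \approx h f(c_j)$, so $\sum_j p_j^2 \approx h\int f^2$), this collapses to $\tfrac{1}{nh}$ plus a contribution of order $1/n$, which is negligible at the optimal bandwidth. This step is essentially bookkeeping.

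The bias term is where the real work lies. On $B_j$ the bias is $\bar f_j - f(x)$ with $\bar f_j = p_j/h$ the bin average of $f$, so I would compute $\int_{B_j}(f(x)-\bar f_j)^2\,dx = \int_{B_j} f^2 - h\bar f_j^2$. Expanding $f$ about the bin center $c_j$ via $f(x)-f(c_j)=\int_{c_j}^x f'(t)\,dt$ and using that the linear part averages to zero over the symmetric bin, the leading contribution is $(f'(c_j))^2 h^3/12$. Summing over bins turns $\tfrac{h^2}{12}\sum_j (f'(c_j))^2 h$ into a Riemann sum converging to $\tfrac{h^2}{12}\int (f'(u))^2\,du$, with accumulated remainder $o(h^2)$. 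The main obstacle is precisely making this remainder rigorous: since $f'$ is only assumed absolutely continuous, $f''$ exists merely almost everywhere, so I cannot Taylor expand to second order naively. Instead I would bound $\int_{B_j}(f(x)-\bar f_j)^2\,dx - (f'(c_j))^2 h^3/12$ through the integral remainder of $f'$ and invoke $\int (f')^2 < \infty$ (absolute continuity of the integral / dominated convergence) to guarantee that the summed Riemann-sum error is genuinely $o(h^2)$.

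Finally, combining the two pieces yields $R = \tfrac{h^2}{12}A + \tfrac{1}{nh} + o(h^2) + o(1/n)$ with $A = \int (f')^2$, and the remaining claims are calculus: differentiating $g(h) = \tfrac{A}{12}h^2 + \tfrac{1}{nh}$ and setting $g'(h) = \tfrac{A}{6}h - \tfrac{1}{nh^2}=0$ gives $h^3 = 6/(nA)$, i.e. $h^\ast = n^{-1/3}(6/A)^{1/3}$. Substituting $h^\ast$ back into $g$ and simplifying the constant $\tfrac{6^{2/3}}{12} + 6^{-1/3} = \tfrac{3}{2}\,6^{-1/3} = (3/4)^{2/3}$ produces $R \sim C n^{-2/3}$ with $C = (3/4)^{2/3}A^{1/3}$, as claimed.
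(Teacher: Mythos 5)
Your proposal is correct, and it follows the same overall route as the paper's appendix proof: bias--variance decomposition of the IMSE for the histogram estimator, bin-by-bin computation with $p_j=\int_{B_j}f$, the variance term collapsing to $\tfrac{1}{nh}+o(1/n)$ via $\sum_j p_j=1$ and $\sum_j p_j^2=O(h)$, and a Riemann sum turning $\tfrac{h^2}{12}\sum_j h\,(f'(\cdot))^2$ into $\tfrac{h^2}{12}\int (f')^2$. Two differences are worth recording. First, the paper's proof expands $f(u)=f(x)+(u-x)f'(x)+\tfrac{(u-x)^2}{2}f''(\tilde x)$, a Lagrange-remainder form that tacitly requires $f''$ to exist everywhere, which is strictly stronger than the stated hypothesis that $f'$ is merely absolutely continuous; your handling of the bias through the integral representation $f(x)-f(c_j)=\int_{c_j}^{x}f'(t)\,dt$, with the remainder controlled by $\int (f')^2<\infty$, is the version that actually matches the hypotheses, so on this point your argument is more careful than the paper's. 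Second, the paper's appendix proves only the risk expansion and never verifies the claims about $h^\ast$ and the rate $C n^{-2/3}$; you carry out that optimization explicitly, and your constant simplification $\tfrac{6^{2/3}}{12}+6^{-1/3}=\tfrac{3}{2}\,6^{-1/3}=(3/4)^{2/3}$ is correct, so your write-up covers a part of the statement that the paper leaves to the reader.
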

The proof of Theorem \ref{eq:theorem612} can be seen in appendix \ref{proof:611}. We see that with an optimally chosen binwidth, the risk decreses to $0$ at rate $n^{-2/3}$. Moreover, it can be seen that kernel estimators converge at the faster rate  $n^{-4/5}$ and that in a certain sense no faster rate is possible.\\
We discuss kernel density estimators, which are smoother and can converge to the true density faster. Here, the word kernel refers to any smooth function $K$ such that $K(x) \geq 0$ and
\begin{equation}
\int K(x) dx = 1, \ \int xK(x) dx = 0 \ \mbox{and} \ \sigma_K^2 \equiv \int x^2 K(x) dx > 0.
\end{equation}
Some commonly used kernels are the following:
\renewcommand{\arraystretch}{2.0}
\begin{table}[!h]
\begin{center}
\begin{tabular}{ || p{5cm} | p{5cm} ||  }
\hline
  the Gaussian kernel: & $K(x) = \dfrac{1}{\sqrt{2 \pi}} \exp^{-x^{2}/2}$  \\
  \hline
  the tricube kernel: & $K(x) = \dfrac{70}{81} \Big( 1- |x|^3 \Big)^3 I(x)$ \\
  \hline
\end{tabular}
\end{center}
\end{table}
where 
\begin{align*}
I(x) = \begin{cases}
1 & \mbox{if} \ |x| \ \leq 1 \\
0 & \mbox{otherwise}
\end{cases}
\end{align*}
\begin{definition} \label{def:densityEstimator}
Given a kernel $K$ and a positive number $h$, called the bandwidth, the kernel density estimator is defined to be
\begin{equation}
\widehat{f}_n(x) = \dfrac{1}{n} \sum_{i=1}^n \dfrac{1}{n} K \Big( \dfrac{x - X_i}{h} \Big).
\end{equation}
\end{definition}
\begin{theorem}
Assume that $f$ is continuous at $x$, $h_n \rightarrow 0$, and $n h_n \rightarrow \infty$ as $n \rightarrow \infty$. Then, by weak low of large number(WLLN), $\widehat{f}_n(x) \rightarrow f(x)$.
\end{theorem}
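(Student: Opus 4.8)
The plan is to prove the stated convergence in the mean-square (and hence, by Chebyshev's inequality, in-probability) sense, which is the content intended by the phrase ``by WLLN''. Writing $Y_{n,i} = \frac{1}{h_n} K\big(\frac{x - X_i}{h_n}\big)$ so that $\widehat{f}_n(x) = \frac{1}{n}\sum_{i=1}^n Y_{n,i}$, the summands are i.i.d.\ for each fixed $n$ but form a triangular array, since their law changes with $n$ through $h_n$; hence the classical i.i.d.\ WLLN does not apply verbatim, and instead I would control the first two moments directly and invoke the bias--variance decomposition
\begin{equation*}
\mathbb{E}\big[(\widehat{f}_n(x) - f(x))^2\big] = \big(\mathbb{E}[\widehat{f}_n(x)] - f(x)\big)^2 + \mathrm{Var}(\widehat{f}_n(x)),
\end{equation*}
showing that each term tends to $0$.

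For the bias, I would compute $\mathbb{E}[\widehat{f}_n(x)] = \mathbb{E}\big[\frac{1}{h_n} K(\frac{x-X_1}{h_n})\big] = \int \frac{1}{h_n} K\big(\frac{x-y}{h_n}\big) f(y)\,dy$ and substitute $u = (x-y)/h_n$ to obtain $\int K(u)\, f(x - h_n u)\, du$. Since $\int K(u)\,du = 1$, the bias equals $\int K(u)\,[f(x - h_n u) - f(x)]\,du$, and I would treat this as an approximate-identity argument: given $\varepsilon > 0$, continuity of $f$ at $x$ provides $\delta$ with $|f(x-v)-f(x)| < \varepsilon$ for $|v| < \delta$, so the contribution from $|h_n u| < \delta$ is at most $\varepsilon$, while the contribution from $|h_n u| \ge \delta$ is controlled by the tail $\int_{|u|\ge \delta/h_n} K(u)\,du \rightarrow 0$. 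Letting $n \rightarrow \infty$ and then $\varepsilon \rightarrow 0$ gives $\mathbb{E}[\widehat{f}_n(x)] \rightarrow f(x)$.

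For the variance, using independence I would write $\mathrm{Var}(\widehat{f}_n(x)) = \frac{1}{n}\mathrm{Var}(Y_{n,1}) \le \frac{1}{n}\,\mathbb{E}[Y_{n,1}^2]$, and the same change of variables yields
\begin{equation*}
\mathbb{E}[Y_{n,1}^2] = \frac{1}{h_n}\int K^2(u)\, f(x - h_n u)\, du,
\end{equation*}
so that $\mathrm{Var}(\widehat{f}_n(x)) \le \frac{1}{n h_n}\int K^2(u)\, f(x-h_n u)\,du$. Because the integral converges to $f(x)\int K^2(u)\,du < \infty$ and $n h_n \rightarrow \infty$ by hypothesis, the variance tends to $0$. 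Combining the two estimates, $\mathbb{E}[(\widehat{f}_n(x)-f(x))^2] \rightarrow 0$, and Chebyshev's inequality upgrades this to the asserted convergence in probability $\widehat{f}_n(x) \rightarrow f(x)$.

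The main obstacle is the tail term of the bias (and the analogous step for the variance): controlling $\int_{|h_n u|\ge\delta} K(u)\,|f(x-h_n u) - f(x)|\,du$ under the weak hypothesis that $f$ is merely continuous \emph{at the single point} $x$. To make this rigorous I would need either that $f$ is bounded, so that $|f(x-h_nu)-f(x)|\le 2\sup f$ and the tail mass $\int_{|u|\ge\delta/h_n}K \rightarrow 0$, or that $K$ has compact support, which makes the region $|h_n u|\ge\delta$ empty for large $n$; the latter is the cleanest route and covers the tricube kernel, whereas the Gaussian case relies on boundedness of $f$. I would also flag the normalization in Definition \ref{def:densityEstimator}: the estimator should read $\frac{1}{n}\sum_{i=1}^n \frac{1}{h_n}K(\frac{x-X_i}{h_n})$, with a factor $\frac{1}{h_n}$ rather than a second $\frac{1}{n}$, since it is precisely this $\frac{1}{h_n}$ that produces the $\frac{1}{nh_n}$ variance rate and makes the assumption $n h_n \rightarrow \infty$ the operative one.
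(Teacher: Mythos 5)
Your proof is correct, and it is worth being clear about what the paper actually provides here: no proof at all. The paper's ``proof'' of this theorem is a one-line deferral to \cite{WassermanNonparametric}, so there is no in-paper argument to compare against; what you have written is a self-contained reconstruction of the standard argument (essentially Parzen's), which is also the route the cited text takes. Your observation that the phrase ``by WLLN'' cannot be taken literally is apt: the summands $\frac{1}{h_n}K\big(\frac{x-X_i}{h_n}\big)$ form a triangular array whose law changes with $n$, so the i.i.d.\ weak law does not apply verbatim, and the honest proof is exactly your bias--variance decomposition of the pointwise mean squared error followed by Chebyshev.

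Two of your side remarks identify genuine defects in the paper rather than gaps in your argument, and both are correct. First, Definition \ref{def:densityEstimator} is indeed misprinted: with the inner factor $\frac{1}{n}$ in place of $\frac{1}{h}$, the estimator integrates to $\frac{1}{n}$ and converges pointwise to $0$, so the theorem would be false for the estimator as literally defined; the paper itself silently uses the correct normalization $K_h(x,X) = h^{-1}K\big((x-X)/h\big)$ in its Appendix computation of the risk. Second, continuity of $f$ at the single point $x$ together with the paper's kernel axioms genuinely does not control the tail term $\int_{|h_n u|\ge\delta} K(u)\,|f(x-h_nu)-f(x)|\,du$; some supplementary hypothesis is required, e.g.\ $f$ bounded, $K$ compactly supported, or Parzen's condition $|u|K(u)\rightarrow 0$ as $|u|\rightarrow\infty$. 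The last is the weakest of the three and still suffices: undoing the substitution bounds the offending term by $\frac{1}{h_n}\sup_{|v|\ge\delta/h_n}K(v) \le \frac{1}{\delta}\sup_{|v|\ge\delta/h_n}|v|K(v) \rightarrow 0$, and it covers both the Gaussian and tricube kernels used in the paper. So your proof, with any one of these conditions made explicit, is complete; the theorem as printed in the paper is not.
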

\begin{proof}
Please see \cite{WassermanNonparametric}
\end{proof}
\begin{remark}\label{theorem:628}
Let us now consider what happens when $f^\prime = 0$ but $f^{''} \neq 0$. Since the leading term in the Theorem \ref{eq:theorem612} drops out, we can carry Theorem \ref{eq:theorem612} one step further. \newline
Let $R_x = \mathbb{E} \Big( f(x) - \widehat{f}(x) \Big)^2$ be the risk at a point $x$ and $R = \int R_x dx$ denotes the integrated risk. Assume that $f^{''}$ is absolutely continuous and that $\int \Big( f^{'''} (x) \Big)^2 dx < \infty$. Then,
\begin{equation}
R_x = \dfrac{1}{4} \sigma_K^4 h_n^4 \Big( f^{''} (x) \Big)^2 + \dfrac{f(x) \int K^2(x) dx}{nh_n} + O \Big( \dfrac{1}{n} \Big) + O(h_n^6)
\end{equation}
and
\begin{equation}\label{eq:theorem629}
R = \dfrac{1}{4} \sigma_K^4 h_n^4 \int \Big( f^{''} (x) \Big)^2 dx + \dfrac{\int K^2 (x) dx}{nh} + O \Big( \dfrac{1}{n} \Big) + O (h_n^6)
\end{equation}
where $\sigma_K^2 = \int x^2 K(x) dx$ and $x_n = O(a_n)$ means that $|x_n / a_n|$ is bounded for all large $n$.
\end{remark}
The proof of Theorem \ref{theorem:628} is supplied in Appendix \ref{proof:628}.
Differentiate (\ref{eq:theorem629}) with respect to $h$ and set it equal to $0$ gives an asymptotically optimal bandwidth as:
\begin{equation}\label{eq:630}
h_\ast = \Big( \dfrac{c_2}{c_1^2 A(f)n} \Big)^{1/5}
\end{equation}
where $c_1 = \int x^2 K(x) dx$, $c_2 = \int K(x)^2 dx$ and $A(f) = \int \Big( f^{''}(x) \Big)^2 dx$, which explain that the best bandwidth decreases at a rate $n^{-1/5}$. \\
We compute $h_\ast$ from (\ref{eq:630}) under the idealized assumption that $f$ is normal. This choice of $h_\ast$, which is called the normal reference rule, works well if the true density is very smooth.
\subsection{Algorithms}
In this section, we represent our algorithm to compute confidence interval by the small and large sample and density estimation for random variables of persistence landscape with the nonparametric approach.
\subsubsection{Bootstrap Persistence Landscape}
Let us have a random sample $Y=[y_1, \ldots,y_n]$ from a cumulative distribution of $F$ and work on a variety estimation problems(see  \cite{BootstrapEfron}). We generate a sample from $\widehat{F}_n$ to be used as input to a simulation model(see  \cite{simulationBanks}). The first, in Algorithm \ref{algol:bootstrapVariable}, generating a sample of empiricial distribution function by following $X_1^* , \ldots , X_n^* \sim \widehat{F}_n$ of landscape random variables. In Algorithm \ref{algol:RandomVariate}, arrange the data from the smallest to the largest with the common sorting algorithm, then assign the probability $\dfrac{1}{n}$ to each interval $y_{(i-1)} < y \leq y_{(i)}$. The slope of the ith segment is given by:
\[
a_i = \dfrac{x_{(i)} - x_{(i-1)}}{1/n - (i-1)/n}.
\] 
The inverse transform technique can be used for a variety of distribution specially empirical distribution. to obtain samples, the following are performed:
\begin{enumerate}
\item Compute the CDF of the desired random variable $X$.
\item Solve the equation $F(X) = R$ for $X$ in term of $R$.
\item Generate (as needed) uniform random number $R_1,R_2,\ldots$,so on, and computed the desired random variates by:
\[
X_i = F^{-1} (R_i).
\]
\end{enumerate} 
Using the second step of Algorithm \ref{algol:bootstrapVariable}, applying the logarithm function(each derivative function) on summation of random variate, we have $T^\ast_{n,1},\ldots,T^\ast_{n,B}$.  
\begin{algorithm}
\KwData{random variables $Y= [y_1 , \ldots , y_n]$; Empiricial distribution function$\widehat{F}_n$; $k$ is number of generating sample from $\widehat{F}_n$.}
\KwResult{create vectorA = ($T^\ast_{n,1}, \ldots , T^\ast_{n,B}$).}
\Begin{
Create vectorA with dimension B $\times$ 1\;
Create vectorB with dimension k $\times$ 1\;

\For{$i \leftarrow 1$ \KwTo $B$}{
$\mbox{vectorB} \longleftarrow \mbox{call algorithm \ref{algol:RandomVariate} with input(Y,k)}$\;
$temp \longleftarrow \sum_{i=1}^n vectorB[i] $ \;
$\mbox{vectorA}[i] \longleftarrow \log(temp) $\;
}
}
\caption{Construct variables with bootstrap approach \label{algol:bootstrapVariable}}
\end{algorithm}
\begin{algorithm}
\KwData{sort variables $Y = [y_1 , \ldots , y_n]$; n which is number of sampling from $\widehat{F}_n$;}
\KwResult{vectorB which is n random variables.}
\Begin{
Create vectorB with dimension n $\times$ 1\;
R is random number with uniform distribution\;
\For{$k \leftarrow 1$ \KwTo $n$}{
\For{$i \leftarrow 1$ \KwTo $length(\mbox{Y})$}{
$CDF[i] \longleftarrow i/length(Y)$\;
}
\For{$i \leftarrow 1$ \KwTo $length(\mbox{Y})$}{
\If{$R < CDF[1]$}{
$\mbox{generateX} \longleftarrow \mbox{Y[1]} + (\mbox{Y[1]} / (1/length(\mbox{Y}))) * R $
}
\If{$i < length(\mbox{Y}) \ \mbox{and} \ (i-1 > 0) \ \mbox{and} \ R >= CDF[i] \ \mbox{and} \ R <= CDF[i+1]$}{
 $\mbox{generateX} \longleftarrow \mbox{Y}[i-1] + ((\mbox{Y}[i] - \mbox{Y}[i-1])/(1/length(\mbox{Y})))  \times \newline (R - (i-1)/length(\mbox{Y}))$
}
}
$\mbox{vectorB}[k] \longleftarrow \mbox{generateX}$
}
}
\caption{Random variate generation \label{algol:RandomVariate}}
\end{algorithm}
By the law of large numbers, in algorithm \ref{algol:vBoot}, $v_{boot} \xrightarrow[]{\text{a.s}} V_{\widehat{F}_n}(T_n)$ as $b \rightarrow \infty$.
\begin{algorithm}
\KwData{$T^\ast_n$ bootstrap sample variable.}
\KwResult{compute variance of bootstrap method.}
\Begin{
$B \longleftarrow length(T^\ast_n)$ \;
$V_{boot} \longleftarrow \dfrac{1}{B} \sum_{b=1}^B \Big( T^\ast_n[b] - \dfrac{1}{B} \sum_{r=1}^B T^\ast_n[r] \Big)^2 $\;
}
\caption{Compute  variance of bootstrap method \label{algol:vBoot}}
\end{algorithm}
There are several ways to construct a bootstrap confidence interval. In Algorithm \ref{algol:StudentizedConfidence}, the sample quantiles of the bootstrap quantities $Z^\ast_{n,1},\ldots,Z^\ast_{n,B}$ should approximate the true quantiles of the distribution of $Z_n = \dfrac{T_n - \theta}{\widehat{se}_{boot}}$. Let $z^\ast_\alpha$ denote the $\alpha$ sample quantile of $Z^\ast_{n,1},\ldots,Z^\ast_{n,B}$, then $\mathbb{P}(Z_n \leq z^\ast_\alpha) \approx \alpha$. Let
\[
C_n = \Big( T_n - z^\ast_{1- \alpha/2} \widehat{se}_{boot}, T_n - z^\ast_{\alpha/2} \widehat{se}_{boot} \Big)
\]
then, \\
$\mathbb{P}(\theta \in C_n)$ $\begin{array}[t]{l}= \mathbb{P} \Big( T_n - z^\ast_{1- \alpha/2} \widehat{se}_{boot} \leq \theta \leq T_n - z^\ast_{\alpha/2} \widehat{se}_{boot} \Big) \\
= \mathbb{P} \Big( z^\ast_{\alpha/2} \leq \dfrac{T_n - \theta}{se_{boot}} \leq z^\ast_{1- \alpha/2}\Big) \\
= \mathbb{P} \Big( z^\ast_{\alpha/2} \leq Z_n \leq z^\ast_{1 - \alpha/2} \Big) \\
\approx 1- \alpha.
\end{array}$
\begin{algorithm}
\KwData{$\alpha$; sort variables $Y = [y_1 , \ldots , y_n]$; vectorA = ($T^\ast_{n,1}, \ldots , T^\ast_{n,B}$).}
\KwResult{bootstrap Studentized confidence interval.}
\Begin{
$a \longleftarrow 1 - ( \alpha / 2)$ \;
$ b \longleftarrow alpha/2$ \;
create $Z_{n}^\ast$ with dimension of $B \times 1$ \;
$\widehat{se}_{boot} \longleftarrow \mbox{call algorithm \ref{algol:vBoot} with input vectorA}$ \;
$ T_n \longleftarrow log \Big( \sum_{i=1}^n Y[i] \Big)$\;
\For{$i \leftarrow 1$ \KwTo $B$}{
$Z_{n}[i]^\ast \longleftarrow (\mbox{vectorA}[i] - T_n ) / ((\mbox{vectorA}[i] - ( \mbox{vectorA}[i]/length(\mbox{vectorA}))^2)/length(\mbox{vectorA}))$
}
$\mbox{confidence}^+ \longleftarrow  T_n - ($ compute Quantile with  percent b on data $Z_{n,b}^\ast \times \sqrt{\widehat{se}_{boot}}$)\;
$\mbox{confidence}^- \longleftarrow  T_n - ($ compute Quantile with  percent a on data $Z_{n,b}^\ast \times \sqrt{\widehat{se}_{boot}}$) \;
}
\caption{Compute  bootstrap Studentized confidence interval \label{algol:StudentizedConfidence}}
\end{algorithm}
\newline In Algorithm \ref{algol:DeltaMethod}, we compute a large sample confidence interval is $T(\widehat{F}_n) \pm z_{\alpha/2} \widehat{se}$. In Algorithm \ref{algol:SeHat}, $\widehat{l}(x)$ is the empiricial infulence function that is equivalent Theorem \ref{eq:influenceFunctions}. 
\begin{algorithm}
\KwData{random variables $Y = [y_1 , \ldots , y_n]$; $z_{\alpha/2}$.}
\KwResult{Compute confidence interval with delta method.}
\Begin{
$n \longleftarrow length(Y)$ \;
$T(\widehat{F}_n) \longleftarrow \log(\sum_{i=1}^n Y[i])$ \;
$\mbox{confidence}^+ \longleftarrow T(\widehat{F}_n)  + ( z_{\alpha/2} \times \mbox{algorithm \ref{algol:SeHat} with input Y} )$ \;
$\mbox{confidence}^- \longleftarrow T(\widehat{F}_n) + ( z_{\alpha/2} \times \mbox{algorithm \ref{algol:SeHat} with input Y} )$ \;
}
\caption{Compute Delta Method confidence interval \label{algol:DeltaMethod}}
\end{algorithm}
\begin{algorithm}
\KwData{random variables $Y=[y_1 , \ldots , y_n]$.}
\KwResult{compute $\widehat{se}$.}
\Begin{
$temp \longleftarrow \log(\sum_{i=1}^n Y[i])$ \;
$ \widehat{l}(x) \longleftarrow matrixY - temp$ \;
$\widehat{\tau}^2 \longleftarrow \Big( \sum_{i=1}^{n} \widehat{l}(x)^2 \Big) / n$ \;
$\widehat{se} \longleftarrow \sqrt{\widehat{\tau}^2} / \sqrt{n}$ \;
}
\caption{Compute  $\widehat{se}$ \label{algol:SeHat}}
\end{algorithm}
\subsubsection{Density Estimation Persistence Landscape}
Let $Y= (y_1,\ldots,y_n)$ and $x^\ast = y_i$, we compute $B_{x^\ast} = \lbrace y \ | \ |y - x^\ast| < h \rbrace$ for $x^\ast$ and replace $B_{x^\ast}$ with $X_i$ in Definition \ref{def:densityEstimator}. In cross validation (Definition \ref{eq:crossValidation}), we return $h$ which is the minimum square error loss function. We choose minimum $h$, which is the optimal cross-validation estimator of risk (Definition \ref{eq:crossValidation}). Now, we apply Algorithm \ref{algol:Risk} for all of the random variables generated by Algorithm \ref{algol:RandomVariate} and then obtained theorem \ref{eq:theorem612} for density estimator of persistence landscapes.
\begin{algorithm}
\KwData{random variables $Y = [y_1 , \ldots , y_n]$; bandwidth $h$, guassian kernel $K$.}
\KwResult{compute matrix $\widehat{f}(x^\ast)$.}
\Begin{
create $\widehat{f}(x)$ with dimension $n \times 1$ \;
\For{$i  \leftarrow 1$ \KwTo $length(matrixY)$}{
$x^\ast \longleftarrow Y[i]$ \;
$B[x^\ast] \longleftarrow \lbrace x \ | \ |Y[i] - x^\ast| < h \rbrace$ \;
$\widehat{f}(x^\ast) = \dfrac{1}{n} \sum_{i=1}^n \dfrac{1}{n} K \Big( \dfrac{x^\ast - B[x^\ast]}{h} \Big).$
}
}
\caption{Compute  density estimator \label{algol:densityEstimator}}
\end{algorithm}
\begin{algorithm} \label{algol:Risk}
\KwData{random variable $x$, $h^\ast$.}
\KwResult{compute $f^\prime$.}
\Begin{
compute cluster $x_0$ of bootstrap random variable $x$ with distance $|x-x_0| < h^\ast $ \;
$ f^\prime(x,x_0^i) \longleftarrow \dfrac{\widehat{f}(x) - \widehat{f}(x_0^i)}{x-x_0^i}$ \;
$f^\prime(x,x_0) \longleftarrow \dfrac{\sum f^\prime(x,x_0^i)}{length(x_0)}$ \;
}
\caption{Compute integrated mean square error}
\end{algorithm}
\section{Applications} \label{section:Applications}
In this section, we calculated the nonparametric methods on persistence landscapes to confirm accuracy of our methods respect to another approach, using \textsf{R}  programming language with TDA package by \cite{TDARFasy}.
\subsection{Sphere and Torus}
\cite{DiaconisSampling} developed an algorithm for sampling submanifold with a probability distribution. In this section, we sample from the sphere and torus  uniformly with respect to the surface.
Let $R$  be the major radius and  $r$ as the minor radius, we use an explicit equation in Cartesian coordinates for a torus, which is:
\[
\Big( R - \sqrt{x^2 + y^2} \Big)^2 + z^2 = r^2.
\]
For $1000$ points, we construct a filtered simplicial complex as follows. First, we form the Vietoris-Rips complex $R(X,\epsilon)$, which consists of simplices with vertices in $X = \lbrace x_1,\ldots,x_n \rbrace \subset \mathbb{R}^d$ and diameter at most $\epsilon$. The sequence of Vietoris-Rips complex obtained by gradually increasing the radius $\epsilon$  create a filtration of complexes. We denote the limit of filtration of the Vietoris-Rips complex with $5$ and maximum dimension of homological feature with $1$($0$ for components, $1$ for loops). To compute landscape function in Equation \ref{eq:landscapeBubenik}, we set $t \in [0,5],k = 1$. We construct $100$ random variables by Equation \ref{functionalBubenik}, the logarithm function is our plug-in estimator, and the empirical influence function is different among random variables with plug-in estimator. As can be seen from Figure \ref{fig:sphereTorusDelta}, we repeated the Algorithm \ref{algol:DeltaMethod} for $100$ times to obtain the upper and lower confidence interval. Table \ref{tbl:bootstrap1} present the nonparametric bootstrap computed using the approach for a  $95\%$ critical value with a few assumptions about persistence landsapces. As shown in Figures \ref{fig:sphereDensity} and  \ref{fig:torusDensity}, we create $100$ random variables and $500$ times bootstrap sample data (see Algorithm \ref{algol:RandomVariate} ) and replaced with orginal data. We showed that using a confidence interval such as $\bar{Y} \pm z^\ast \dfrac{\sigma}{\sqrt{n}}$, gives $0.06628939$ for density estimation of the sphere and $0.02067551$ for torus, which is difference between upper and lower confidence interval. On the other hand, using nonparametric method with correct kernel as the tricube kernel and $h^\ast$, we obtained $0.0004472946$ for sphere and $0.0003435891$ for torus points, which are significant different. Now, to evaluate the quality of an estimator $\widehat{f}_n$ with respect to $f$ with integrated mean squared error, we apply Algorithm \ref{algol:Risk} which is obtain Figure \ref{fig:RiskofSphere} for $100$ times with $0.002$ precision of bandwidth $h$ and Gaussian kernel for sphere points and  for torus with difference between below and upper confidence interval in $100$ times, is $0.00004416$. 
\begin{figure}
\centering
\includegraphics[scale=0.45]{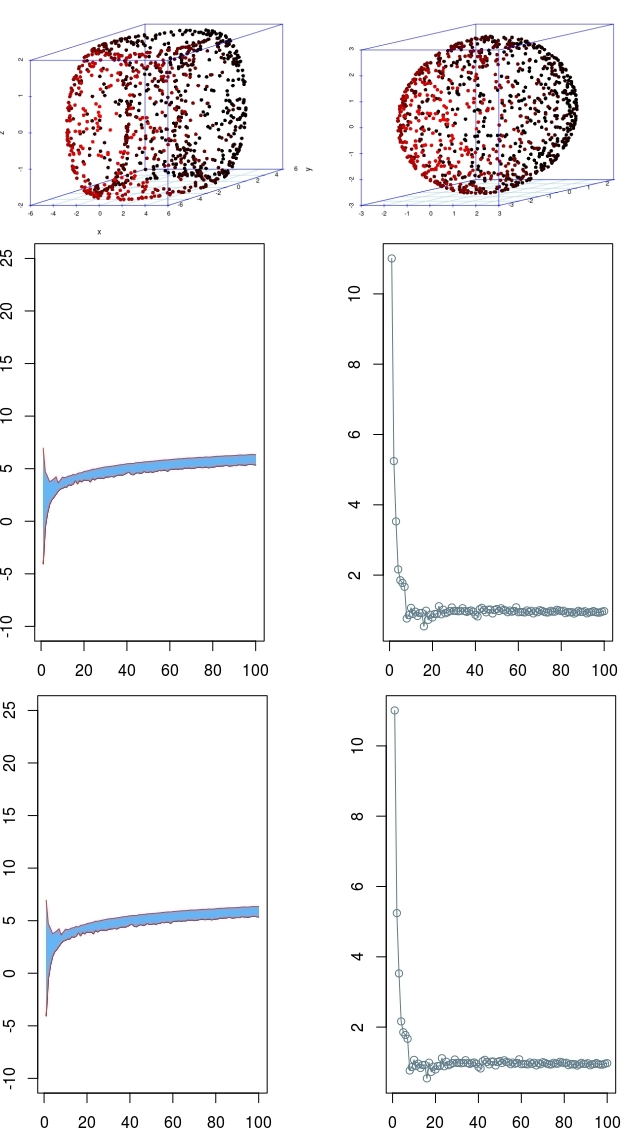}
\caption{ In row $1$, using $1000 \times 100$ simulated data from uniform distribution ,$100$ times for each point, for torus(column $1$) and sphere(column $2$), have been computed the random variables of persistence landscape by delta method. In row $2$, column $1$, for sphere and row $3$, column $1$, the torus are shown. row $2$, column $2$ shows the difference between the upper and lower confidence interval of delta method of the sphere.Similarly, row $3$, column $2$ for the torus is shown.
}
\label{fig:sphereTorusDelta}
\end{figure}
\begin{figure}
\centering
\includegraphics[scale=0.62]{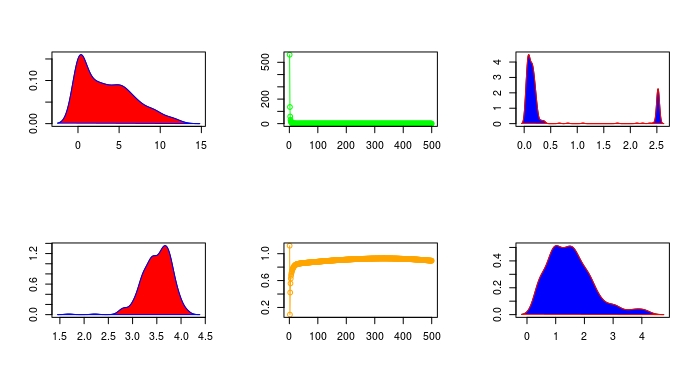}
\caption{In row $1$,column $1$, using $500$ simulated data from uniform distribution for a sphere with radius $2$, the density of random variables of persistence landscape have been drawn. In row $1$, column $2$, $\widehat{j}(h)$ with percision $0.002$  that minimum value is $0.0029$ have been drawn. In row $1$, column $3$, kernel density estimator with bandwidth $0.056$ have been drawn. In row $2$, column $1$ using the bootstrap method for alternate generating random variate with persistence landscape have been drawn. In row $2$, column $2$ value of $\widehat{j}(h)$ with percision $0.002$  that minimum value  $0.0934$ is drawn. In row $2$, column $3$ plot kernel density estimator with bandwidth $0.004$ is drawn.}
\label{fig:sphereDensity}
\end{figure}
\begin{figure}
\centering
\includegraphics[scale=0.62]{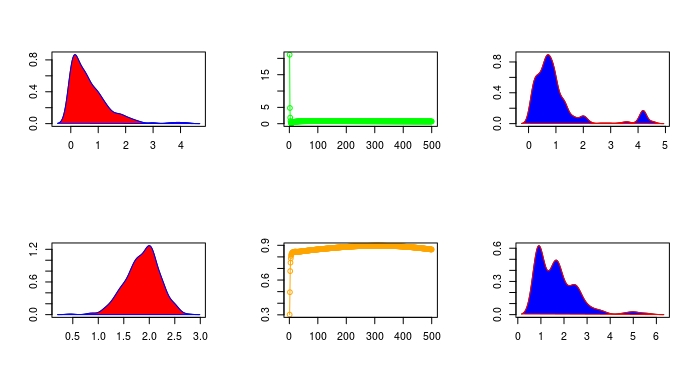}
\caption{In row $1$, column $1$, using  $500$ simulated data from uniform distribution for a torus with $R = 2$ and $r =1$ , the density of random variables of persistence landscape have been drawn. In row $1$, column $2$, $\widehat{j}(h)$ with percision $0.002$  that minimum value is $0.0236$ have been drawn. In row $1$, column $3$, kernel density estimator with bandwidth $0.014$ have been drawn. In row $2$, column $1$ using the bootstrap method for alternative generating random variate with persistence landscape have been drawn. In row $2$, column $2$ value of $\widehat{j}(h)$ with percision $0.002$ that minimum value $0.3019$ is drawn. In row $2$, column $3$, the kernel density estimator with bandwidth $0.004$ is drawn.}
\label{fig:torusDensity}
\end{figure}
\begin{figure}
\centering
\includegraphics[scale=0.48]{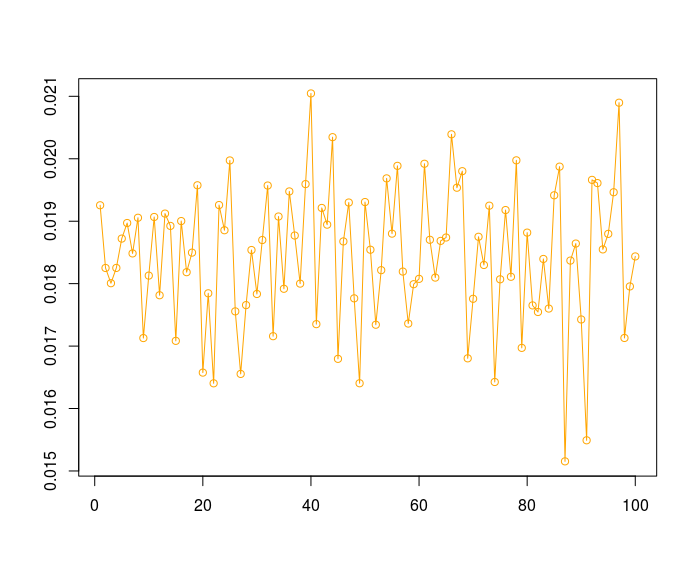}
\includegraphics[scale=0.48]{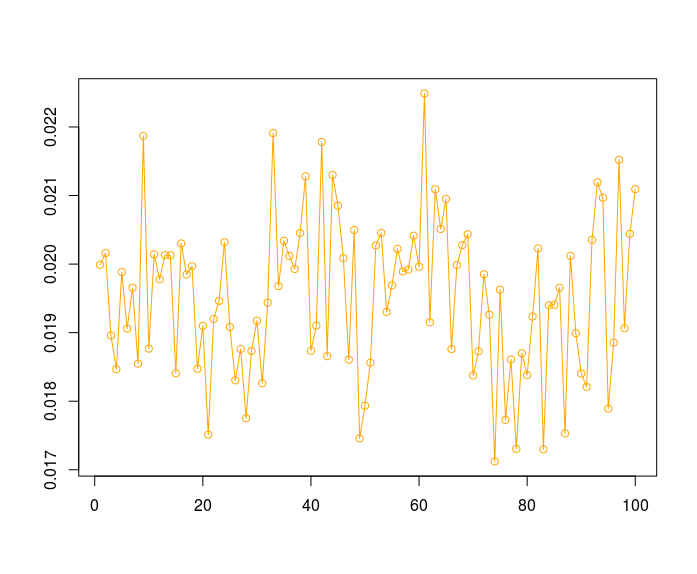}
\caption{We run $100$ times to evaluate minimize estimated risk $R(\widehat{f}_n,f)$ for points on sphere in row $1$ and torus in row $2$ with Guassian kernel.}
\label{fig:RiskofSphere}
\end{figure}
\begin{table}
\begin{center}
\begin{tabular}{ *2l }    \toprule
\emph{Method} & \emph{$95\%$ Interval}  \\\midrule
pivotal    & $(4.248799,4.319345)$  \\ 
normal & $(3.929793,4.406627)$ \\ 
studentize & $(4.297297,4.418873)$ \\
percentile & $(4.017076,4.087621)$ \\\bottomrule
 \hline
\end{tabular}
\quad
\begin{tabular}{ *2l }    \toprule
\emph{Method} & \emph{$95\%$ Interval}  \\\midrule
pivotal    & $(1.498743,1.540386)$  \\ 
normal & $(1.241105,1.566729)$ \\ 
studentize & $(2.415224,2.95656)$ \\
percentile & $(1.267447,1.309091)$ \\\bottomrule
 \hline
\end{tabular}
\end{center}
\caption{In the left column, we calculated bootstrap confidence interval with four commonly used accurate approaches. We sampled  $1000$ points for the sphere. The right column is the same for torus points.}
\label{tbl:bootstrap1}
\end{table}
\subsection{Breast Cancer}
Considering the application of computational topology on some dataset, extract topological invariant of real data set such as prognosis and diagnosis of breast cancer is one of the important in biological research. This dataset (now is available in UCI machine learning repository) consists of radius, perimeter, area, compactness and another attribute for each cell nucleus. Features are computed from a digitized image of a fine needle aspirate (FNA) of a breast mass. They describe characteristics of the cell nuclei present in the image. Some related publications on this subject can be found in \cite{InstituteBreastCancer}. For applying our approach on this dataset, we sampled $500$ points from all of data and then constructed persistence landscapes based on Definition \ref{eq:landscapeBubenik}: $t \in [0,5],k = 1$. In figure \ref{fig:cancerDensity}, if we compute difference between upper and lower confidence interval for random variables of persistence landscape, we obtain a value of $0.001335084$ but applying a density estimation for random variables gives a precision value of $0.0009229557$ with $h=0.006$ and minimum of $\widehat{J}(h) = 0.1318038 $ in equation \ref{eq:crossValidation}. From Figure \ref{fig:deltaCancer}, we obtained $0.01235495$ and $0.01237665$ for a integrated mean squared error as lower and upper confidence interval, respectively.
\begin{table}
\begin{center}
\begin{tabular}{ *2l }    \toprule
\emph{Method} & \emph{$95\%$ Interval}  \\\midrule
pivotal    & $(1.607248,1.641904)$  \\ 
normal & $(1.356374,1.752889)$ \\ 
studentize & $(2.073679,2.456706)$ \\
percentile & $(1.46736,1.502016)$ \\\bottomrule
 \hline
\end{tabular}
\end{center}
\caption{In the column, we calculate bootstrap confidence interval with four commonly used precise approaches. We sampled $500$ points of breast cancer dataset.}
\label{tbl:bootstrap2}
\end{table}
\begin{figure}
\centering
\frame{\includegraphics[scale=0.4]{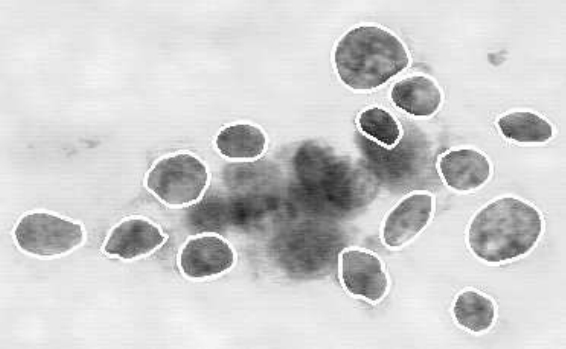}}
\includegraphics[scale=0.62]{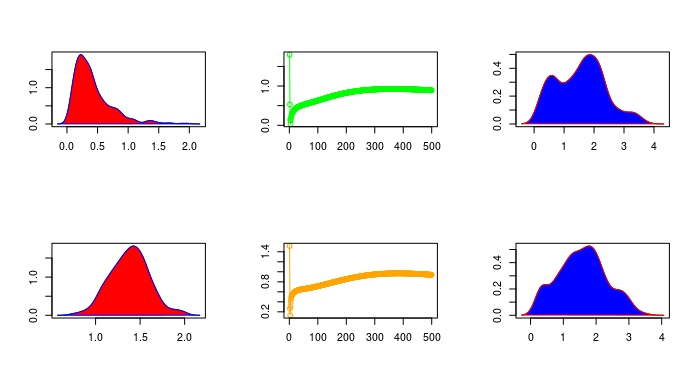}
\caption{In row 1, it is a magnified image of a malignant breast FNA. \cite{MangasarianBreastCancer}. In row $2$ and column $1$, from uniform distribution, we sampled $500$ points for a breast cancer dataset which we have ploted density of random variables of persistence landscape, in row $2$, column $2$ we plotted $\widehat{j}(h)$ with percision $0.002$  that minimum value is $0.03028$, in row $2$, column $3$ plot kernel density estimator with bandwidth $0.01$ tricube kernel. In row $3$, column $1$ we use bootstrap method for alternate generating random variate with persistence landscape, in row $3$, column $2$ we plotted $\widehat{j}(h)$ with percision $0.002$  that minimum value is $0.02581$, and finally, in row $3$, column $3$ we plotted kernel density estimator with bandwidth $0.006$ and tricube kernel.}
\label{fig:cancerDensity}
\end{figure}
\begin{figure}
\centering
\includegraphics[scale=0.62]{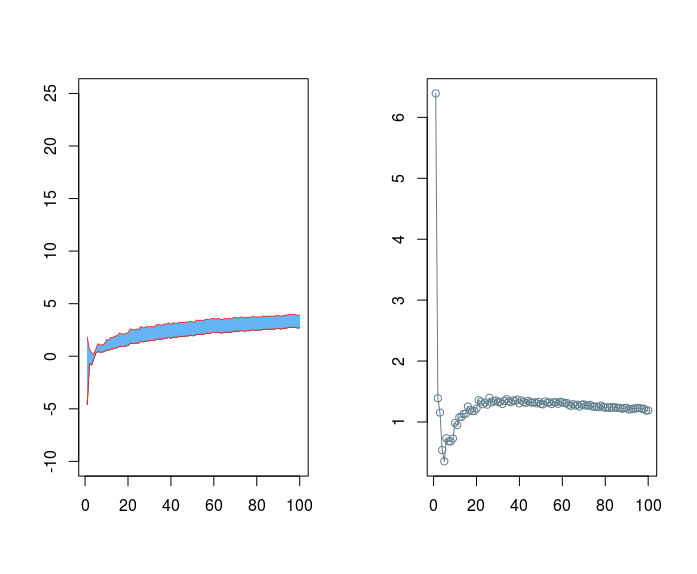}
\includegraphics[scale=0.48]{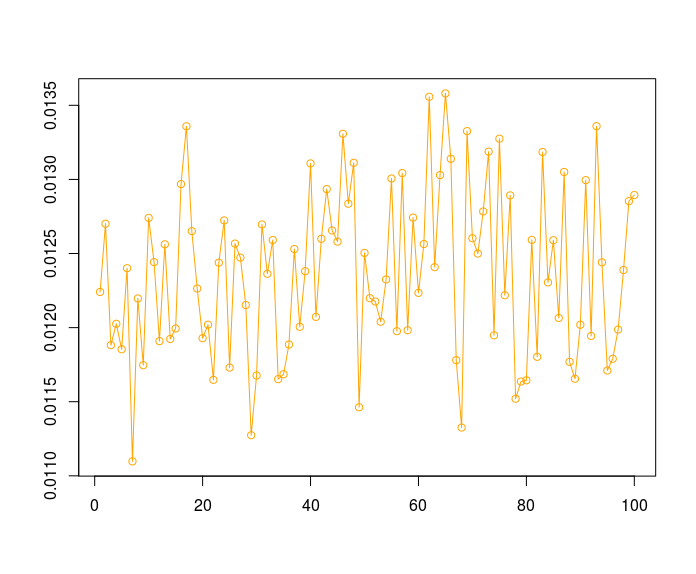}
\caption{In column $1$ for dataset, we sampled $500$ points of breast cancer dataset, for each $100$ times each, and we have computed delta method for random variables of persistence landscape. In column $2$  difference between upper and lower of the confidence interval of delta method is presented. In row $2$, we have presented minimum risk estimator $R(\widehat{f}_n,f)$ for points on breast cancer dataset with Guassian kernel.}
\label{fig:deltaCancer}
\end{figure}
\subsection*{Acknowledgments}
The authors gratefully acknowledge the support of the center of statistical learning and its application at Allameh Tabatabai University (under grant No. P/H/040). We would like to thank Naiereh Elyasi for her helpful discussions.
\newpage
\appendix
\section*{Appendix A.}
\begin{proof} \label{proof:611} of Theorem \ref{eq:theorem612}. For any $x,u \in B_j$,
\[
f(u) = f(x) + (u+x)f^\prime(x) + \dfrac{(u-x)^2}{2}f^{''}(\tilde{x}) 
\]
for some $\tilde{x}$ between $x$ and $u$. Hence, 
\begin{align*}
p_j = {\displaystyle \int_{B_{j}}} f(u) du \begin{array}[t]{l}= {\displaystyle \int_{B_{j}}} \Big( f(x) + (u-x)f^\prime(x) + \dfrac{(u-x)^2}{2} f^{''}(\tilde{x}) \Big) du \\ = f(x)h + hf^\prime(x) \Big(h \Big( j - \dfrac{1}{2} \Big) - x \Big) + O(h^3).
\end{array}
\end{align*}
Therefore, the bias $b(x)$ is 
\begin{align*}
b(x) \begin{array}[t]{l}= \mathbb{E}(\widehat{f}_n(x)) - f(x) = \dfrac{p_j}{h} - f(x) \\ = \dfrac{f(x)h + hf^\prime (h(j - \dfrac{1}{2}) - x) + O(h^3)}{h} - f(x) \\ = f^\prime (x) \Big( h \Big( j - \dfrac{1}{2} \Big) - x \Big) + O (h^2). \end{array}
\end{align*}
By the mean value theorem we have, for some $\tilde{x}_j \in B_j$, that
\begin{align*}
{\displaystyle \int_{B_{j}}} b^2(x)dx \begin{array}[t]{l} = {\displaystyle \int_{B_{j}}} (f^\prime(x))^2 \Big( h(j - \dfrac{1}{2}) - x \Big)^2 dx + O(h^4) \\ = (f^\prime (\tilde{x_j}))^2 {\displaystyle \int_{B_{j}}} \Big( h(j - \dfrac{1}{2}) - x \Big)^2 dx + O(h^4) \\ = (f^\prime (\tilde{x_j}))^2 \dfrac{h^3}{12} + O(h^4). \end{array}
\end{align*}
Therefore, 
\begin{align*}
{\displaystyle \int_{0}^1} b^2(x) dx \begin{array}[t]{l} = {\displaystyle \sum_{j=1}^m \int_{B_{j}}} b^2(x) dx + O(h^3) \\ = {\displaystyle \sum_{j=1}^m } (f^\prime(\tilde{x_j}))^2 \dfrac{h^3}{12} + O(h^3) \\ = \dfrac{h^2}{12} {\displaystyle \sum_{j=1}^m } h(f^\prime(\tilde{x_j}))^2 + O(h^3) \\ = \dfrac{h^2}{12} {\displaystyle \int_0^1} (f^\prime (x))^2 dx + o(h^2).  \end{array}
\end{align*}
Now consider the variance. By the mean value theorem, $p_j = {\displaystyle \int_{B_{j}}} f(x) dx = hf(x_j)$ for some $x_j \in B_j$. Hence, with $v(x) = \mathbb{V}(\widehat{f}_n(x))$, 
\begin{align*}
{\displaystyle \int_0^1} v(x) dx \begin{array}[t]{l} = {\displaystyle \sum_{j=1}^m} {\displaystyle \int_{B_{j}}} v(x) dx = {\displaystyle \sum_{j=1}^m} {\displaystyle \int_{B_{j}}} \dfrac{p_j (1 - p_j)}{nh^2} \\ = \dfrac{1}{nh} {\displaystyle \sum_{j=1}^m \int_{B_{j}}} p_j - \dfrac{1}{nh^2} {\displaystyle \sum_{j=1}^m \int_{B_{j}}} p_j^2 \\ = \dfrac{1}{nh} - \dfrac{1}{nh}{\displaystyle \sum_{j=1}^m} h^2 f^2(x_j) = \dfrac{1}{nh} - \dfrac{1}{n} \sum_{j=1}^m hf^2(x_j) \\ = \dfrac{1}{nh} - \dfrac{1}{n} \Big({\displaystyle \int_{0}^1} f^2(x)dx + o(1) \Big) = \dfrac{1}{nh} + o (\dfrac{1}{n}). \end{array}
\end{align*}
\end{proof}
\begin{proof}\label{proof:628} the Theorem \ref{theorem:628}.
Write $K_h(x,X) = h^{-1} K \Big( (x-X)/h \Big)$ and $\widehat{f}_n(x) = n^{-1} \sum_i K_h (x,X_i)$. Thus $\mathbb{E} [\widehat{f}_n(x)] = \mathbb{E} [K_h (x,X)]$ and $\mathbb{V}[\widehat{f}_n(x)] = n^{-1} \mathbb{V} [K_h(x,X)]$. Now,
\begin{align*}
\mathbb{E} [K_h (x,X)]  \begin{array}[t]{l} = \int \dfrac{1}{h} K \Big( \dfrac{x-t}{h} \Big) f(t) dt \\
= \int K(u) f(x-hu) du \\
= \int K(u) \Big[ f(x) - huf^\prime(x) + \dfrac{h^2 u^2}{2} f^{''}(x) + \ldots \Big] du \\
= f(x) + \dfrac{1}{2} h^2 f^{''}(x) \int u^2 K(u) du \ldots \end{array}
\end{align*}
since $\int K(x) dx = 1$ and $\int x K(x) dx = 0$. The bias is
\begin{align*}
\mathbb{E} [K_{h_{n}} (x,X)] - f(x) = \dfrac{1}{2} \sigma_K^2 h_n^2 f^{''}(x) + O(h_n^4).
\end{align*}
By a similar calculation,
\begin{align*}
\mathbb{V} [\widehat{f}_n(x)] = \dfrac{f(x) \int K^2 (x) dx}{nh_n} + O \Big( \dfrac{1}{n} \Big).
\end{align*}
\end{proof}
\vskip 0.2in

\end{document}